\title[Streaming Bandits]{Efficient Algorithms for Finite Horizon and Streaming Restless Multi-Armed Bandit Problems}
\author{Aditya Mate}
\affiliation{
  \institution{Harvard University}
  \country{Boston, MA, U.S.A.}
 }
\email{aditya\_mate@g.harvard.edu}
\author{Arpita Biswas}
\affiliation{
  \institution{Harvard University}
  \country{Boston, MA, U.S.A.}
 }
\email{arpitabiswas@seas.harvard.edu}
\author{Christoph Siebenbrunner}
\affiliation{
  \institution{Harvard University}
  \country{Boston, MA, U.S.A.}
 }
\email{csiebenbrunner@seas.harvard.edu}
\author{Susobhan Ghosh}
\affiliation{
  \institution{Harvard University}
  \country{Boston, MA, U.S.A.}
 }
\email{susobhan\_ghosh@g.harvard.edu}
\author{Milind Tambe}
\affiliation{
  \institution{Harvard University}
  \country{Boston, MA, U.S.A.}
 }
\email{milind\_tambe@harvard.edu}
\begin{abstract}
We propose Streaming Bandits, a Restless Multi-Armed Bandit (RMAB) framework in which heterogeneous arms may arrive and leave the system after staying on for a finite lifetime. Streaming Bandits naturally capture the health-intervention planning problem, where health workers must manage the health outcomes of a patient cohort while new patients join and existing patients leave the cohort each day. Our contributions are as follows: (1) We derive conditions under which our problem satisfies indexability, a pre-condition that guarantees the existence and asymptotic optimality of the Whittle Index solution for RMABs. We establish the conditions using a polytime reduction of the Streaming Bandit setup to regular RMABs. (2) We further prove a phenomenon that we call index decay — whereby the Whittle index values are low for short residual lifetimes — driving the intuition underpinning our algorithm. (3) We propose a novel and efficient algorithm to compute the index-based solution for Streaming Bandits. Unlike previous methods, our algorithm does not rely on solving the costly finite horizon problem on each arm of the RMAB, thereby lowering the computational complexity compared to existing methods. (4) Finally, we evaluate our approach via simulations run on real-world data sets from a tuberculosis patient monitoring task and an intervention planning task for improving maternal healthcare, in addition to other synthetic domains. Across the board, our algorithm achieves a 2-orders-of-magnitude speed-up over existing methods while maintaining the same solution quality. The full paper is available at: \url{https://arxiv.org/pdf/2103.04730.pdf}
\end{abstract}
\keywords{Restless Multi-Armed Bandits;
Whittle Index;
Finite Horizon;
Intervention Planning}
\newtheorem{theorem}{Theorem}
\newtheorem{fact}{Fact}
\newtheorem{lemma}{Lemma}
\newtheorem{definition}{Definition}
\newcounter{resetdummycounter}
\newcommand{\resetcounterlist}[1]{%
  \renewcommand*{\do}[1]{\counterwithin*{##1}{resetdummycounter}}%
  \docsvlist{#1}}
\begin{document}


\pagestyle{fancy}
\fancyhead{}


\maketitle 


\section{Introduction}


\label{sec:intro}

In community healthcare settings, adherence of patients to prescribed health programs, that may involve taking regular medication or periodic health checkups, is critical to their well-being. One way to improve patients' health outcomes is by tracking their health or monitoring their adherence to such programs. Such health monitoring programs combined with suitably designed intervention schemes help patients alleviate health issues such as diabetes~\citep{newman2018}, hypertension~\citep{brownstein2007}, tuberculosis~\citep{chang2013,rahedi2014}, depression~\citep{lowe2004,mundorf2018}, etc. However, health interventions often require dedicated time of healthcare workers, which is a severely scarce resource, grossly inadequate to meet the total demand. This issue is especially more severe in the global south. Moreover, planning interventions with these limited resources is made more challenging due to the fact that the extent of adherence of patients may be both, uncertain as well as transient. Consequently, the healthcare workers have to grapple with this sequential decision making problem of deciding which patients to intervene on, with limited resources, in an uncertain environment. Existing literature on healthcare monitoring and intervention planning (HMIP)~\citep{Akbarzadeh2019,mate2020collapsing,bhattacharya2018restless, mate2021riskAware, mate2021field} casts this as a \emph{restless multi-armed bandit} (RMAB) planning problem. In this setup, the patients are typically represented by the arms of the bandit and the planner must decide which arms to pull (which patients to intervene on) under a limited budget. The RMAB problem formalizes the (restless) behavioral dynamics of the patients both in the presence and in the absence of interventions. 

In addition to healthcare, RMABs have caught traction as solution techniques in a myriad of other domains involving limited resource planning for applications such as anti-poaching patrol planning \citep{qian2016restless}, multi-channel communication systems~\citep{zhao_liyu_paper}, sensor monitoring tasks~\citep{Glazebrook2006}, UAV routing~\citep{le2008multi} etc. For ease of presentation, we consider the HMIP problem for motivation but our approach is relevant and can be extended to other real-world domains.

The existing literature on RMABs for intervention planning, however, has mainly focused on problems involving an infinite time horizon (i.e., the health programs are assumed to run forever) and, moreover, the results are limited to settings where no new patients (or bandit arms) arrive midway during the health program. We consider a general class of RMABs, which we call \emph{streaming restless multi-armed bandits}, or S-RMAB. In an S-RMAB instance, the arms of the bandit are allowed to arrive asynchronously, that is, the planner observes an incoming and outgoing stream of bandit arms. The classic RMAB (both with infinite and finite horizon) is a special case of the S-RMAB where all arms appear (leave) at the same time. Additionally, each arm of an S-RMAB is allowed to have its own transition probabilities, capturing the potentially heterogeneous nature of patient cohorts. S-RMABs display a special structure in the presence of streaming arms and a finite horizon, which the existing methods fail to utilize. Our approach exploits this structure to arrive at approaches that perform better in the streaming bandit setting.

A fairly general approach, proposed by \cite{qian2016restless} may be applied even when patients arrive and leave asynchronously after staying for a finite duration. The method allows to approximate the exact solution arbitrarily well, but it is computationally expensive as the number of patients or arms increases. A more recent approach, proposed by~\cite{mate2020collapsing}, exploits the structure of the HMIP and is considerably faster, but the method relies on the assumption of an infinite planning horizon. This algorithm suffers a severe deterioration in performance when employed on shorter horizon settings. 

Our \textbf{contribution} consists of proposing a new approach, designed for the finite-horizon and asynchronous arrival settings, 
that achieves a combination of the advantages of existing 
methods, i.e. high solution quality and low runtime, in those settings. We provide theoretical justifications for the use of Whittle indices in streaming RMABs, as well as for the setup of our algorithms, designed to leverage the structure of the finite horizon and asynchronous cases. We further show that our method also applies to S-RMAB arms exhibiting \emph{reverse threshold optimality}, while previous methods only applied to settings with \emph{forward threshold optimality}. We perform experimental evaluations of our algorithms using real-world data from two domains, as well as synthetic and adversarial domains. Our algorithms provide a 2-orders-of-magnitude speed-up compared to existing accurate methods, without loss in performance. 

\section{Related work} 
The RMAB problem was introduced by \cite{whittle1988restless}. The paper studied the RMAB problem with the goal of maximizing the average reward in a dynamic programming framework. Whittle formulated a relaxation of the problem and provided a heuristic called \emph{Whittle Index policy}. This policy is optimal when the underlying Markov Decision Processes satisfy {indexability}, which is computationally intensive to verify. Later, \cite{papadimitriou1994complexity} established that solving RMAB is PSPACE hard, even when the transition rules are known. Since then, specific classes of RMABs have been studied extensively. \cite{qian2016restless} studied the infinite horizon RMAB problem and proposed a binary search based algorithm to find
Whittle index policy. However, the algorithm becomes computationally expensive as the number of arms grows. \cite{bhattacharya2018restless} models the problem of maximizing health information coverage as an RMAB problem and proposes a hierarchical policy which leverages the structural assumptions of the RMAB model.  
\cite{Akbarzadeh2019} provide a solution for the class of bandits with ``controlled restarts'' and state-independent policies, possessing the indexability property. \cite{mate2020collapsing} model a health intervention problem, assuming that the uncertainty about the state collapses when an intervention is provided. They provide an algorithm called \emph{Threshold Whittle} to compute the Whittle indices for infinite horizon RMAB. There are many other papers that provide Whittle indexability results for different subclasses of Partially Observable Markovian bandits~\citep{Glazebrook2006,Hsu2018,Sombabu2020,Liu2010}. However, these papers focus on infinite horizon, whereas we focus on the more challenging setting when there is a fixed finite horizon. 

The RMAB problem with finite horizon has been comparatively less studied.  \cite{nino2011computing} provided solutions to the  one-armed restless bandit problem, where only one arm is activated at each time before a time horizon $T$. Their solution do not directly extend to the scenario when multiple arms can be pulled at each time step. \cite{hu2017asymptotically} considered finite horizon multi-armed restless bandits with identically distributed arms. They show that an index based policy based on the Lagrangian relaxation of the RMAB problem, similar to the infinite horizon setting, provides a near-optimal solution. \cite{lee2019optimal} study the problem of selecting patients for early-stage cancer screening, by formulating it as a very restricted subclass of RMAB. All these works consider that all the arms are available throughout $T$ time steps. Some other works, such as \citep{meuleau1998solving, hawkins2003langrangian} also adopt different approaches to decomposing the bandit arms, which may be applicable to finite horizon RMABs. These techniques to solving weakly coupled Markov Decision Processes are more general, but consequently less efficient than the Whittle Index approach in settings where indexability assumption holds. 

The S-RMAB problem has been studied in a more restricted setting by \cite{zayas2019asymptotically}. 
They assume that, at each time step, arms may randomly arrive and depart due to random abandonment. However, the main limitation of their solution is the assumption that all arms have the same state-transition dynamics. This assumption does not hold in most of the real-world instances and thus, in this paper, we consider heterogeneous arms---arms are allowed to have their own transition dynamics. We show empirically that our algorithms perform well even with heterogeneous arms. 


Another related category of work studied \emph{sleeping arms} for the \emph{stochastic multi-armed bandits} (SMAB) problem, where the arms are allowed to be absent at any time step~\cite{kanade2009sleeping,kleinberg2010regret,biswas2015truthful}. However, the SMAB is different from RMAB because, in the former, when an arm is activated, a reward is drawn from a Bernoulli reward distribution (and not dependent on any state-transition process). Thus, the algorithms and analysis of SMABs do not translate to the RMAB setting.



\section{Streaming bandits}\label{sec:formulation}

The \emph{streaming restless multi-armed bandit} (S-RMAB) problem is a general class of RMAB problem where a stream of arms arrive over time (both for finite and infinite-horizon problems). Similar to RMAB, at each time step, the decision maker is allowed to take active actions on at most $k$ of the available arms. Each arm $i$ of the S-RMAB is a Partially Observable Markov Decision Process (POMDP)---represented by a $4$-tuple $(\mathcal{S}, \mathcal{A}, \mathcal{P}, r)$. $\mathcal{S}=\{0,1\}$ denotes the state space of the POMDP, representing the ``bad'' state (say, patient not adhering to the health program) and ``good'' state (patient adhering),  
respectively. $\mathcal{A}$ is the action space, consisting of two actions $\mathcal{A}=\{a,p\}$ where an action $a$ (or, $p$), denotes the active (or, passive) action. The state $s \in \mathcal{S}$ of the arm, transitions according to a known transition function, $P_{s,s'}^{a, i}$ if the arm is pulled and according to the known function, $P_{s,s'}^{p, i}$ otherwise. We also assume the transition function to conform to two natural constraints often considered in existing literature \citep{zhao_liyu_paper, mate2020collapsing}: (i) Interventions should positively impact the likelihood of arms being in the good state, i.e. $P_{01}^a>P_{01}^p$ and $P_{11}^a>P_{11}^p$ and (ii) Arms are more likely to remain in the good state than to switch from the bad state to the good state, i.e. $P_{11}^a>P_{01}^a$ and $P_{11}^p>P_{01}^p$. Though the transitions probabilities are known to the planner, the actual state change is stochastic and is only partially observable---that is, when an arm is pulled, the planner discovers the true state of the arm; however, when the arm is not pulled, uncertainty about the true state persists. Under such uncertainties, it is customary to analyze the POMDP using its equivalent belief state MDP representation instead \citep{mate2020collapsing}. The state space of this MDP is defined by a set of all possible ``belief'' values that the arm can attain, denoted by $\mathcal{B}_i$. Each belief state $b \in \mathcal{B}_i$ represents the likelihood of the arm being in state $1$ (good state). This likelihood is completely determined by the number of days passed since that arm was last pulled and the last observed state of the arm \citep{zhao_liyu_paper}. 
At each time step $t$, the planner accrues a state-dependent reward $r_t$ from an active arm $i$, defined as: 

\[r_t(i)= 
\begin{cases}
 0~\text{if $s_t(i)=0$ (arm $i$ is in the bad state at time $t$)}\\
1 ~\text{if $s_t(i)=1$ (arm $i$ is in the good state at time $t$).}\\
\end{cases}
\]

The total reward\footnote{For a natural number $N$, we use the notation $[N]:= \{1,...N\}$.} of $R_t=\sum_{i\in [N]}(r_t(i))$ is accrued by the planner at time $t$, which is the sum of individual rewards obtained from the available arms. The planner's goal is to maximize her total reward collected, $\bar{R} := \sum_{t \in [T]}R_t$. 
This reward criterion is motivated by our applications in the healthcare intervention domain: interventions here correspond to reminding patients to adhere to their medication schedules and the good and bad states refer to patients either adhering or not adhering. The planner's goal is to maximize the expected number of times that all patients in the program adhere to their medication schedules. However, due to the limited budget, the planner is constrained to pull at most $k$ arms per time step. Assuming a set of $N$ arms, the problem then boils down to determining a policy, $\pi:\mathcal{B}_1\times\ldots\mathcal{B}_N \rightarrow \mathcal{A}^N$ which governs the action to choose on each arm given the belief states of arms, at each time step, maximizing the total reward accumulated across $T$ time steps. 

Contrary to previous approaches that typically consider arms to all arrive at the beginning of time and stay forever, in this paper we consider streaming multi-armed bandits---a setting in which arms are allowed to arrive asynchronously and have finite lifetimes. We denote the number of arms arriving and leaving the system at a time step $t\in[T]$ by $X(t)$ and $Y(t)$, respectively. Each arm $i$ arriving at time $t$, is associated with a fixed lifetime $L_i$ (for example, $L_i$ can be used to represent the duration of the health program for a patient, which is known to the planner). The arm consequently leaves the system at time $t + L_i$. Thus, instead of assuming a finite set of $N$ arms throughout the entire time horizon, we assume that the number of arms at any time $t$ is denoted by the natural number $N(t)$, and can be computed as $N(t)= \sum_{s=1}^{t}(X(s)-Y(s))$.  Thus, the goal of the planner is to decide, at each time step $t$, which $k$ arms to pull (out of the $N(t)\gg k$ arms, relabeled as [N(t)] each timestep for ease of representation), in order to maximize her total reward,
\begin{equation}\bar{R}:= \displaystyle\sum_{t\in[T]}\ \sum_{i\in [N(t)]}r_t(i).\end{equation}

\section{Methodology}

The dominant paradigm for solving RMAB problems is the Whittle index approach. The central idea of the Whittle approach is to decouple the RMAB arms and then compute indices for each arm that capture the ``value'' of pulling that arm. The Whittle Index policy then proceeds by pulling the $k$ arms with the largest values of Whittle Index. This greedy approach makes the time complexity linear in the number of arms, as indices can be computed independently for each arm. The computation of the index hinges on the notion of a ``passive subsidy'' $m$, which is the amount rewarded to the planner for each arm kept passive, in addition to the usual reward collected from the arm. The Whittle Index for an arm is defined as the infimum value of subsidy, $m$ that must be offered to the planner, so that the planner is indifferent between pulling and not pulling the arm.
To formalize this notion, consider an arm of the bandit in a belief state $b$. Its active and passive value functions, under a discount factor of $\beta$, and when operating under a passive subsidy $m$, can be written as:
\begin{align}
    V_{m, T}^{p}(b) &= b + m+ \beta  V_{m, T-1}(bP_{11}^p + (1-b)P_{01}^p) \label{eq:passive} \\
    V_{m, T}^{a}(b) &= b + \beta  b V_{m, T-1}(P_{11}^a) + \beta (1-b)V_{m, T-1}(P_{01}^a) \label{eq:active}
\end{align}
The value function for the belief state $b$ is $V_{m, T}(b) = \max \{V_{m, T}^{p}(b), \\ V_{m, T}^{a}(b) \}$. The Whittle Index for the belief state $b$, with a residual lifetime $T$ is defined as: $\inf_{m}\{m:V_{m, T}^{p}(b)=V_{m, T}^{a}(b)\}$.
The Whittle Index approach is guaranteed to be asymptotically optimal when a technical condition called \textit{indexability} holds for all the arms. Intuitively, indexability requires that if for some passive subsidy $m$, the optimal action on an arm is passive, then $\forall m' >m$, the optimal action should still remain passive. Equivalently, indexability can be expressed as: 
$\frac{\partial}{\partial m}  V_{m, T}^p(b) \ge \frac{\partial }{\partial m} V_{m, T}^a(b)$.

In this section we first show theoretically that the Streaming Bandit setup is indexable (subsection~\ref{sec:indexability}). Next, in subsection~\ref{sec:index-decay}, we observe and formalize a useful phenomenon about the Whittle Index in the finite horizon setting. We use this phenomenon to design fast algorithms for S-RMABs in subsection~\ref{sec:algo} and we provide runtime complexity analysis for the same in subsection~\ref{sec:complexity}. Finally in subsection~\ref{sec:reverse-threshold} we identify cases beyond those identified by previous work to which our efficient algorithm extends. 

\subsection{Conditions for indexability of streaming bandits}
\label{sec:indexability}
In this section, we extend the conditions for indexability that \cite{mate2020collapsing} originally established for infinite horizon, to the finite horizon setting of Streaming bandits. To show indexability, we first show in Theorem~\ref{thm:srmab-reduction-to-rmab}, that S-RMABs can be reduced to a standard RMAB with augmented belief states. We build on this result and prove another useful Lemma, both of which combined can be used to show that indexability holds for this augmented RMAB instance, and ultimately for S-RMABs (Theorem~\ref{thm:indexable}).   
\begin{definition}[Threshold Optimality~\cite{mate2020collapsing}]
An RMAB instance is called \textit{threshold optimal} if either a \textit{forward threshold} policy or a \textit{reverse threshold} policy is optimal. A forward (or reverse) threshold policy $\pi$ is optimal if there exists a threshold $b^*$ such that it is optimal to take a passive (or active) action whenever the current belief of the arm is greater than  $b^*$, that is, $\pi(b)=0$ (or $\pi(b)=1$) whenever $b>b^*$ and  $\pi(b)=1$ (or $\pi(b)=0$) whenever $b\leq b^*$. 
\end{definition}

First, we show that the belief state MDP of a Streaming Bandit arm with deterministic arrival and departure time can be formulated as an augmented belief state MDP of the same instance with infinite horizon. Using this, we prove that, whenever the infinite horizon problem satisfies threshold optimality for a passive subsidy $m$, then the augmented belief state MDP for finite horizon also satisfies threshold optimality. Using the result that indexability holds whenever threshold optimality is satisfied \cite{mate2020collapsing}, we imply that the Streaming Bandits problem is indexable whenever threshold optimality on the underlying infinite horizon problem is satisfied. 

\begin{theorem}
\label{thm:srmab-reduction-to-rmab}
The belief state transition model for a $2$-state Streaming Bandit arm with deterministic arrival time $T_1$ and departure time $T_2$ can be reduced to a belief state model for the standard restless bandit arm with  $T_2 + (T_2 - T_1)^2$ states.
\end{theorem}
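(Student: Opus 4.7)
The plan is to remove the non-stationarity of the Streaming Bandit arm by augmenting the belief with a deterministic clock component, producing an equivalent time-homogeneous (infinite-horizon) RMAB arm with a finite belief-state space of the claimed size. Because the only source of non-stationarity is the deterministic arrival at $T_1$ and departure at $T_2$, baking the current absolute time into the state description recovers the standard, stationary RMAB template.

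First I would define the augmented state space as a disjoint union of ``clock states'' that carry only the current time (unrewarded, deterministic dynamics used before arrival and after departure) and ``active states'' $(t,b)$ that pair a clock component with a belief component for each $t$ in the active window $[T_1, T_2-1]$. The key combinatorial step is to count the beliefs reachable at each active time $T_1 + k$ for $k = 0, 1, \ldots, L-1$, where $L := T_2 - T_1$: exactly one belief corresponds to ``never actively pulled'' (the initial belief propagated passively $k$ times), and $2k$ beliefs correspond to the choice of last observed state $s \in \{0,1\}$ together with an elapsed time $u \in \{1, \ldots, k\}$ since the last active pull, giving the belief $\mathcal{T}^{u-1}(P_{s,1}^{a})$, where $\mathcal{T}$ denotes the passive Bayesian update. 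Summing $\sum_{k=0}^{L-1}(1+2k) = L^2$ recovers the quadratic term, and the clock contribution accounts for the additive $T_2$.

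Next I would define the transition kernel and reward so that the reduced arm reproduces the original Streaming Bandit dynamics exactly. Pre-arrival clock states transition deterministically forward, feeding into the first active state at time $T_1$; each active state $(t,b)$ transitions to $(t+1, b')$ using the Bayesian updates from Equations \eqref{eq:passive}--\eqref{eq:active}; and states at time $T_2-1$ funnel into the unrewarded absorbing clock chain regardless of action. The reward in active state $(t,b)$ is $b$, matching the expected state-dependent reward defined in the problem formulation, while all clock states yield zero reward. A direct coupling argument then shows that any policy on the reduced RMAB arm induces an identically distributed trajectory of actions and rewards on the original Streaming Bandit arm during $[T_1, T_2]$, and vice versa, so value functions and optimal policies are in one-to-one correspondence.

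The main obstacle is the combinatorial bookkeeping: verifying that the $1 + 2k$ reachable-belief count at each active time is tight (distinct action-observation histories do not collapse to the same belief under the natural assumptions on the transition matrices, and no additional beliefs arise through an alternate action-observation pattern), and then enumerating the clock states so that the totals add to exactly $T_2 + L^2$ rather than an off-by-a-constant variant. Once this accounting is pinned down, the equivalence of dynamics and rewards follows mechanically from the way the Bayesian updates have been lifted onto the augmented state.
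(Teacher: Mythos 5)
Your overall strategy is exactly the paper's: augment the belief with a deterministic clock, use unrewarded ``unavailable'' states before $T_1$ and an absorbing sink after $T_2$, and enumerate the reachable beliefs in the active window. The problem is that your enumeration does not actually produce the number claimed in the theorem, and the step where you wave this away (``the clock contribution accounts for the additive $T_2$'') is where the gap sits. Writing $L := T_2 - T_1$, your count of $1 + 2k$ beliefs at active time $T_1 + k$ sums to $\sum_{k=0}^{L-1}(1+2k) = L^2$, but the clock states you describe number only about $T_1$ (the $T_1 - 1$ pre-arrival states plus one absorbing sink), not $T_2$. Your total is therefore $T_1 + L^2 = T_2 + L^2 - L$, i.e.\ short by $L$ states, which is not an off-by-a-constant issue but grows with the lifetime.

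The discrepancy comes from a modeling choice at arrival. The paper treats the arrival itself as an observation: it creates two states $\langle 0, T_1\rangle$ and $\langle 1, T_1\rangle$, so the ``never pulled since arrival'' branch splits into the two chains indexed by the initial observation, and at active time $T_1+k$ the reachable beliefs are $b_w(j)$ for $w\in\{0,1\}$ and $j\in\{0,\dots,k\}$ (last observed value and elapsed time since that observation, including $j=0$), giving $2(k+1)$ states per step. Summing, $\sum_{k=0}^{L-1}2(k+1) = L(L+1)$ active states, which together with $T_1-1$ pre-arrival states and one sink gives $T_1 + L(L+1) = T_2 + L^2$ exactly. Your version, with a single unobserved prior at arrival and pulls counted only for elapsed time $u\ge 1$, is a perfectly coherent (indeed smaller) reduction, but it does not yield the stated $T_2 + (T_2-T_1)^2$; to match the theorem you need to either adopt the observed-initial-state convention or restate the count. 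The equivalence-of-dynamics and coupling portion of your argument is fine and is essentially implicit in the paper, which treats the state count as the entire content of the proof.
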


\begin{proof}
Consider a streaming arm, that arrives (or, becomes available to the system) at time step $T_1$ and exits (or, becomes unavailable) at time step $T_2$. To capture the arm's arrival and departure in the belief model, we construct a new belief model with each state represented by a tuple $\langle$~\texttt{behavior, time-step}~$\rangle$, where  \texttt{behavior} takes a belief value in the interval $(0,1)$ or is set to $U$ (unavailable). $U$ can be set to any constant value (such as $U=0$). The transition probabilities are constructed as follows:
 \begin{itemize}
     \item The first $T_1-1$ states represent the unavailability of the arm and have deterministic transitions, i.e., for an action $a$,\\
     $P_{\langle U, t-1\rangle, \langle U, t\rangle}^{a} = 1$ for all $t\in\{2,\ldots,T_1-1\}$.
     \item At time $T_1$, the arm can either be in good state or bad state, so we create two states $\langle 1, T_1\rangle$ and $\langle 0, T_1\rangle$. For each $x\in\{0,1\}$, $P_{\langle U, T_1-1\rangle, \langle x, T_1\rangle}^{a} = p_x$ where $p_x$ represents the probability that the arm starts at a good ($1$) or bad ($0$) state. Note that, in our experiments, we assume that the initial state of an arm is fixed to $0$ or $1$, and can be captured by using either $p_x=0$ or $p_x=1$, respectively.
     \item For each time step $t\in\{T_1+1, T_2-1\}$, we create $2(t-T_1+1)$ states: $\langle b_w(0), t\rangle, \ldots, \langle b_w(t-T_1), t\rangle$ for each action $w\in\{0,1\}$. For any $t', t''\in\{0,1,\ldots, t-T_1\}$, the probability of transitioning from the state $\langle b_w(t'), t-1\rangle$ to the state $\langle b_w(t''), t+1\rangle$ is same as the probability of changing from belief value $b_w(t')$ to $b_w(t'')$ in one time step on taking action $w$. 
     \item For time step $t\geq T_2$, we create one sink state $\langle U, T_2\rangle$. This state represents unavailability of the arm subsequent to time step $T_2-1$. For any $t'\in\{0,1,\ldots,T_2-T_1\}$, the probability of transitioning from $\langle b_w(t'), T_2\rangle$ to $\langle U, T_2\rangle$ is $1$. 
 \end{itemize} 

 Thus, the number of states in the new belief network is:
 \begin{eqnarray}
 && T_1 - 1 + 2(1+\ldots+(T_2-T_1)) + 1\\ \nonumber
   &=& T_1 + (T_2-T_1)(T_2-T_1+1) \\      \nonumber
 &=& T_2 + (T_2 - T_1)^2
 \end{eqnarray}
 Thus, $T_2 + (T_2 - T_1)^2$ states are required for converting a belief network representing $2$-state streaming bandits problem to a classic RMAB problem.  
 \end{proof}
 
 \begin{lemma}
If a forward (or reverse) threshold policy $\pi$ is optimal for a subsidy $m$ for the belief states MDP of the infinite horizon problem, then $\pi$ is also optimal for the augmented belief state MDP.
\end{lemma}

\begin{proof}
First, we define the value function for the modified belief states.
\[V_{m}^{p}(\langle b, t \rangle) =
\left\{
	\begin{array}{ll}
		b + m+ \beta  V_{m}(\langle bP_{11}^p + (1-b)P_{01}^p , t+1 \rangle)  & \mbox{if } b\neq U \\
		b + m + V_m(\langle b', t+1\rangle) & \mbox{otherwise }
	\end{array}
\right.\]
\[V_{m}^{a}(\langle b, t \rangle) =
\left\{
	\begin{array}{ll}
		b + \beta  (V_{m}(\langle bP_{11}^a, t+1 \rangle) + (1-b)V_m(\langle P_{01}^a, t+1 \rangle)) \\  
		\quad\quad\quad\quad\quad\quad\quad\quad\quad\quad\quad\quad \mbox{if } b\neq U \\
		b + V_m(\langle b', t+1\rangle) \quad \quad \quad \quad \mbox{otherwise }
	\end{array}
\right.\]
where $b'$ is the next belief state. 

The minimum value of $m_U$ that makes the passive action as valuable as active action at the states $\langle U, t \rangle$ for $T_1\leq t < T_2$, can be obtained by equating
\begin{eqnarray}
V_{m_U}^p(\langle U,t\rangle) &=& V_{m_U}^a(\langle U,t\rangle)\\
\Rightarrow U+m_U+V_{m_U}(\langle b', t+1\rangle) &=& U+V_{m_U}(\langle b', t+1\rangle)\\
\Rightarrow m_U &=& 0.
\end{eqnarray}
Assuming that there exists a forward (or reverse) threshold policy, $m_U=0$ implies that, even without any subsidy, passive action is as valuable as active action. 

Further, we show in the Appendix that the minimum subsidy at any other belief state is greater than $0$. As the belief states $b \neq U$ require a positive subsidy for the passive action to be optimal, while for the belief state $U$, passive is already optimal for a subsidy of zero, a policy that maximizes value while paying minimum subsidy, would never choose to set arms currently in the $u$ state to active. 

\end{proof}

\begin{theorem}\label{thm:indexable}
A Streaming Bandits instance is indexable when there exists an optimal policy, for each arm and every value of $m\in \mathbb{R}$, that is forward (or reverse) threshold optimal policy.
\end{theorem}
\begin{proof}
Using Theorem~$1$ and Lemma~$1$, it is straightforward to see that an optimal threshold policy for infinite horizon problem can be translated to a threshold policy for Streaming bandits instance. Moreover, using the fact that the existence of threshold policies for each subsidy $m$ and each arm $i\in N$ is sufficient for indexability to hold (Theorem $1$ of \cite{mate2020collapsing}), we show that the Streaming bandit problem is also indexable.  
 \end{proof}

\subsection{Index decay for finite horizons}
\label{sec:index-decay}

In this section we describe a phenomenon called \textit{index decay} which is observed considering short horizon. Here, the Whittle index values are low when the residual lifetime of an arm is $0$ or $1$. We formalize this observation in Theorem \ref{thm:index_decay}. We use this phenomenon as an anchor to develop our algorithm~(detailed in \ref{sec:algo}). We proceed by stating one fact and proving one useful Lemma, building up towards the Theorem.

\begin{fact}
\label{fact:linearFunctions}
For two linear functions, $f(x)$ and $g(x)$ of $x$, such that $f'(x)\ge g'(x)$, whenever $f(x_1)<g(x_1)$ and $f(x_2)=g(x_2)$, the following holds: $x_2>x_1$.
\end{fact}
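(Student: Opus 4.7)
The plan is to reduce the statement to a one-dimensional monotonicity argument by passing to the difference function $h(x) := f(x) - g(x)$. Since $f$ and $g$ are affine, so is $h$, and by hypothesis its slope satisfies $h'(x) = f'(x) - g'(x) \ge 0$; hence $h$ is a non-decreasing affine function on $\mathbb{R}$.

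Next, I would translate the two hypotheses into statements about $h$. The condition $f(x_1) < g(x_1)$ rewrites as $h(x_1) < 0$, and $f(x_2) = g(x_2)$ rewrites as $h(x_2) = 0$. Thus $h(x_1) < h(x_2)$, so $h$ takes two distinct values and cannot be the constant function. Combined with $h'(x) \ge 0$, this forces $h'(x) > 0$ strictly, so $h$ is in fact strictly increasing on $\mathbb{R}$.

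From strict monotonicity and $h(x_1) < h(x_2)$, I conclude $x_1 < x_2$, which is the required inequality $x_2 > x_1$.

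The only subtle step — and it is a minor one — is precisely the upgrade from the weak inequality $f'(x) \ge g'(x)$ to strict positivity, which is what licenses the final order-reversal from function values to arguments. Without that observation the conclusion fails: if $h$ were identically zero then both hypotheses would be vacuous and the relative order of $x_1$ and $x_2$ could be arbitrary. Aside from this point, the argument is essentially a two-line computation, so I would not expect any genuine obstacle.
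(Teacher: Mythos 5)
Your proof is correct: passing to the difference $h=f-g$, noting it is affine with nonnegative slope, and observing that $h(x_1)<0=h(x_2)$ rules out the constant case (hence forces strictly positive slope) is exactly the argument needed, and your care about upgrading $h'\ge 0$ to $h'>0$ is the one genuinely necessary observation. The paper states this as a Fact without supplying a proof, so there is nothing to diverge from; your argument is the standard one the authors implicitly rely on and it is complete.
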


\begin{lemma}
\label{lemma:active-rho-greater}
Consider an arm operating under a passive subsidy $m$. Assuming an initial belief state $b_0$, let $\rho^a(b_0, t)$ and $\rho^p(b_0, t)$ denote the probability of the arm being in the good state at time $t~\forall t<T$ when policies $\pi^a(t)$ and $\pi^p(t)$ are adopted respectively, such that  $\pi^a(0)=a$, $\pi^p(0)=p$, and $\pi^a(t)=\pi^p(t)~\forall t\in\{1,\ldots,T\}$. Then, $\rho^a(b_0, t)>\rho^p(b_0, t)~\forall t\in\{1,\ldots,T\}.$
\end{lemma}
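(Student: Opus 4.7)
The plan is to prove the lemma by induction on $t$, exploiting the linearity of the one-step belief update combined with the two transition-probability constraints introduced in Section~\ref{sec:formulation}: (i) $P_{11}^a > P_{11}^p$ and $P_{01}^a > P_{01}^p$, and (ii) $P_{11}^w > P_{01}^w$ for each action $w\in\{a,p\}$. Since $\pi^a$ and $\pi^p$ are functions of $t$ alone and agree at every $t\ge 1$, the two policies induce \emph{the same} transition operator from time $1$ onwards, so the only asymmetry between the two trajectories is caused by the single action taken at time $0$.

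For the base case $t=1$, I would write out the marginal probability directly,
\begin{align*}
\rho^a(b_0,1) - \rho^p(b_0,1) &= b_0\bigl(P_{11}^a - P_{11}^p\bigr) + (1-b_0)\bigl(P_{01}^a - P_{01}^p\bigr),
\end{align*}
which is strictly positive by constraint (i) (for any $b_0\in[0,1]$, at least one of $b_0$ or $1-b_0$ is positive and both differences are positive).

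For the inductive step, suppose $\rho^a(b_0,t) > \rho^p(b_0,t)$ and let $w=\pi^a(t)=\pi^p(t)$ be the common action at time $t\ge 1$. The marginal probability of being in the good state at time $t+1$, viewed as a function of the marginal probability at time $t$, is the affine map $q\mapsto P_{01}^w + q\,(P_{11}^w - P_{01}^w)$, which has strictly positive slope by constraint (ii). Applying this map to both $\rho^a(b_0,t)$ and $\rho^p(b_0,t)$ gives
\begin{align*}
\rho^a(b_0,t+1) - \rho^p(b_0,t+1) &= \bigl(P_{11}^w - P_{01}^w\bigr)\bigl(\rho^a(b_0,t) - \rho^p(b_0,t)\bigr) > 0,
\end{align*}
closing the induction.

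The only subtle point worth spelling out is why the marginal probability $\rho^{\cdot}(b_0,t+1)$ really does depend on the past \emph{only} through $\rho^{\cdot}(b_0,t)$ when the action at time $t$ is fixed (independent of the realized state). This is a standard conditioning argument: $P(s_{t+1}=1) = P(s_{t+1}=1\mid s_t=1)P(s_t=1) + P(s_{t+1}=1\mid s_t=0)P(s_t=0)$, which under action $w$ equals $P_{11}^w\rho + P_{01}^w(1-\rho)$ with $\rho = P(s_t=1)$. I do not expect any real obstacle here — the lemma is essentially a monotonicity-preservation observation — but I would note explicitly that the result only uses constraints (i) and (ii) and is independent of both $m$ and the specific shared tail $\pi^a(t)=\pi^p(t)$ for $t\ge 1$, which is exactly the form needed to apply it inside the index-decay theorem in Section~\ref{sec:index-decay}.
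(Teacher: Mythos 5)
Your proof is correct. A point of comparison worth noting: the paper states this lemma but never actually supplies a proof of it, either in the main text or in the supplementary material --- it is simply invoked inside the proof of the index-decay theorem --- so your argument fills a genuine gap rather than duplicating or diverging from an existing one. The two-line induction you give is the natural argument: the base case $\rho^a(b_0,1)-\rho^p(b_0,1)=b_0(P_{11}^a-P_{11}^p)+(1-b_0)(P_{01}^a-P_{01}^p)>0$ is exactly the quantity the paper computes as $m_1/\beta$ in Theorem~\ref{thm:index_decay}, and the inductive step correctly isolates the only property needed, namely that the one-step marginal update $q\mapsto P_{01}^w+q(P_{11}^w-P_{01}^w)$ is strictly increasing because $P_{11}^w>P_{01}^w$. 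Your explicit remark that the marginal at time $t+1$ depends on the past only through the marginal at time $t$ is the right subtlety to flag: it holds precisely because the lemma takes $\pi^a(t)=\pi^p(t)$ to be a function of $t$ alone (an open-loop action sequence), so the underlying state process is a time-inhomogeneous Markov chain and the law of total probability applies; if the shared tail policy were allowed to depend on the realized observations, the two trajectories could diverge in their actions and the argument would need more care. As stated, the lemma and your proof are consistent, and the proof is complete.
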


\begin{theorem}[Index Decay]\label{thm:index_decay}

Let $V_{m, T}^{p}(b)$ and $V_{m, T}^a(b)$ be the $T$-step passive and active value functions for a belief state $b$ with passive subsidy $m$. Let $m_{T}$ be the value of subsidy $m$, that satisfies the equation $V_{m,T}^p(b)=V_{m,T}^a(b)$ (i.e. $m_{T}$ is the Whittle Index for a residual life time of $T$). Assuming indexability holds, we show that: $\forall T > 1\colon$ $m_{T} > m_{1} > m_0=0$.  
\end{theorem}


\begin{figure}
\centering
\includegraphics[width=0.5\textwidth]{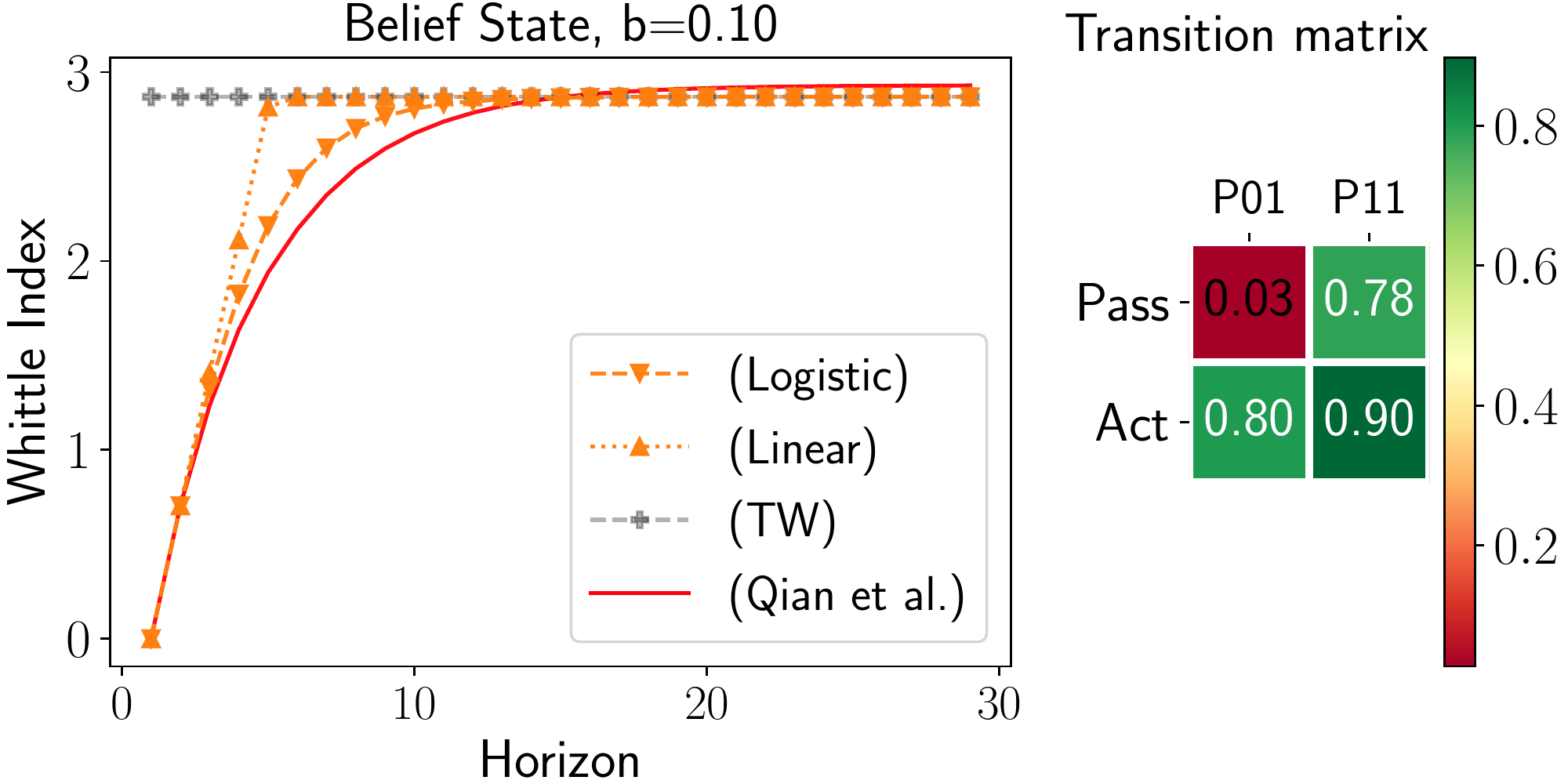}
\caption{Whittle Indices for a belief state as computed by different algorithms. Both our algorithms capture index decay providing good estimates.}%
\label{fig:approximation}
\end{figure}
\begin{proof}
We provide our argument for a more general reward criterion than the total reward introduced in Section \ref{sec:formulation}. Consider a discounted reward criterion with discount factor $\beta\in[0,1]$ (where $\beta=1$ corresponds to total reward). 
$m_0$ is simply the $m$ that satisfies: $V_{m, 0}^{p}(b)=V_{m, 0}^{a}(b)$ i.e., $b+m =b $, thus $m_0=0$. 
Similarly, $m_1$ can be solved by equating $ V_{m_1, 1}^{p}(b)$ and $V_{m_1, 1}^{a}(b)$ and obtained as: $m_1= \beta \Delta b = \beta \Big(\big(b~P_{11}^{a} + (1-b)~P_{01}^{a}\big) - \big(b~P_{11}^{p} + (1-b)~P_{01}^{p}\big)\Big)$

Using the natural constraints $P_{s1}^a > P_{s1}^p$ for $s \in \{0,1\}$, we obtain $m_1>0$.

Now, to show $m_{T}>m_1~\forall T > 1$, we first show that $V_{m_1, T}^a(b)>V_{m_1, T}^p(b)$. Combining this with the fact that $V_m(.)$ is a linear function of $m$ and by definition, $m_{T}$ is a point that satisfies $V_{m_{T}, T}^p(b)=V_{m_{T}, T}^a(b)$, we use Fact~\ref{fact:linearFunctions} and set $f=V_{m, T}^p(b)$, $g=V_{m, T}^a(b)$, $x_1=m_1$ and $x_2=m_{T}$ to obtain $m_{1}<m_{T}$, and the claim follows. To complete the proof we now show that $V_{m_1, T}^a(b)>V_{m_1, T}^p(b)$. 

Starting from an initial belief state $b_0$, let $\rho^p(b_0, t)$ be the expected belief for the arm at time $t$, if the passive action was chosen at $t=0$ and the optimal policy, $\pi^p(t)$ was adopted for $0<t<T$. Similarly let $\rho^a(b_0, t)$ be the expected belief at time $t$, if the active action was chosen at $t=0$ and the \emph{same} policy, $\pi^p(t)$ (which may not be optimal now) was adopted for $0<t<T$. Then, $\beta \big(\rho^a(b_0, 1)- \rho^p(b_0, 1) \big) =m_1>0$ as shown above. 
Note that if we took actions according to $\pi^p(t)$ for $t \in \{1, \dots , T-1\}$ with active action taken at the $0^{th}$ time step, the total expected reward so obtained is upper bounded by the active action value function, $ V_{m_1, T}^a(b_0)$. Thus, 
\begin{align}
    V_{m_1, T}^p(b_0) = b_0 + m_1 & +  \beta\rho^p(b_0, 1) + \sum_{t=2}^{T}\beta^{t}\rho^p(b_0, t) \\ 
    &+ \big( \sum_{t=1}^{T}\beta^t m_1.\mathbb{1}_{\{\pi^p(t)=passive\}}\big) \nonumber \\
    = b_0 + \beta\rho^a(b_0, 1) +& \sum_{t=2}^{T}\beta^{t}\rho^p(b_0, t) +\big( \sum_{t=1}^{T} \beta^t m_1.\mathbb{1}_{\{\pi^p(t)=passive\}}\big) \nonumber \\
    < b_0 + \beta\rho^a(b_0, 1) +& \sum_{t=2}^{T}\beta^{t}\rho^a(b_0, t)  +\big( \sum_{t=1}^{T} \beta^t m_1.\mathbb{1}_{\{\pi^p(t)=passive\}}\big) \\ 
    & \quad\quad\quad\quad\quad\quad\quad \text{(by Lemma \ref{lemma:active-rho-greater})} \nonumber
    \\ &\le V_{m_1, T}^a(b_0)  \nonumber
\end{align}
\end{proof}

\subsection{Proposed algorithms}\label{sec:algo}

The key insight driving the design of our solution is that, by accounting for the index decay phenomenon, we can bypass the need to solve the costly finite horizon problem. We make use of the fact that we can cheaply compute index values for arms with residual lifetime $0$ and $1$, where the index decay phenomenon occurs, and for infinite horizon bandits. Our proposed solution for computing indices for arbitrary residual lifetime is to use a suitable functional form to interpolate between those three observations. We propose an interpolation template, that can be used to obtain two such algorithms, one using a piece-wise linear function 
and the other using a logistic function
.

Recall that we establish in Theorem~\ref{thm:index_decay} that the Whittle Index for arms with a zero residual lifetime, is always zero.
Similarly, indices for arms with residual lifetime of $1$ are simply the myopic indices, computed as: 
\[\Delta b = \big(b~P_{11}^{a} + (1-b)~P_{01}^{a}\big) - \big(b~P_{11}^{p} + (1-b)~P_{01}^{p}\big).\]



For the linear interpolation, we assume $\hat{W}(h)$, our estimated Whittle Index, to be a piece-wise-linear function of $h$ (with two pieces), capped at a maximum value of the Whittle Index for the infinite horizon problem, corresponding to $h=\infty$. We denote Whittle Index for infinite horizon as $\overline{W}$. The first piece of the piece-wise-linear $\hat{W}(h)$ must pass through the origin, given that the Whittle Index is $0$ when the residual lifetime is $0$. The slope is determined by $\hat{W}(h=1)$ which must equal the myopic index, given by $\Delta b$. The second piece is simply the horizontal line $y=\overline{W}$ that caps the function to its infinite horizon value. The linear interpolation index value is thus given by 

\begin{equation}\label{eq:interpolation_linear}
\hat{W}(h,\Delta b, \overline{W}) = \min\{h~\Delta b,  \overline{W}\}.
\end{equation}

The linear interpolation algorithm performs well and has very low run time, as we will demonstrate in the later sections. However, the linear interpolation can be improved by using a logistic interpolation instead. The logistic interpolation algorithm yields moderately higher rewards in many cases for a small additional compute time. For the logistic interpolation, we let 

\begin{equation}\label{eq:interpolation_logistic}
\hat{W}(h,\Delta b, \overline{W}) = \frac{C_1}{1+e^{-C_2h}} +C_3.
\end{equation}

\begin{algorithm}[!t]
  \caption{Interpolation Algorithm Template}
  \label{alg:linear-approximation}
  \begin{algorithmic}[1] 
    \STATE Pre-compute $\overline{W}(b, P^i)~\forall b \in \mathcal{B}_i, ~\forall~i \in [N]$, with transition matrix $P^i$ and set of belief states $\mathcal{B}_i$.
  \STATE \textbf{Input:} $\Bar{b}_{N \times 1} \in [0,1]^N, ~\Bar{h}_{N \times 1} \in [L]^N$, containing the belief values and remaining lifetimes for the $N$ arms.
  \STATE Initialize $\hat{W}_{N \times 1}$ to store estimated Whittle Indices.
  \FOR{each arm $i$ in N}
  \STATE Let $b \coloneqq \Bar{b}_i, h \coloneqq \Bar{h}_i$ and let $P$ be $i$'s transition matrix.
  \STATE Compute the myopic index $\Delta b$ as: \\
   $\Delta b = \big(b~P_{11}^{a} + (1-b)~P_{01}^{a}\big) - \big(b~P_{11}^{p} + (1-b)~P_{01}^{p}\big)$.
  \STATE Set $\hat{W}_i(h,\Delta b, \overline{W})$ according to one of the interpolation functions \eqref{eq:interpolation_linear} or \eqref{eq:interpolation_logistic}.
  \ENDFOR
  \STATE Pull the $k$ arms with the largest values of $\hat{W}$.
  \end{algorithmic}
\end{algorithm}

We now apply the three constraints on the Whittle Index established earlier and solve for the three unknowns $\{C_1, C_2, C_3\}$ to arrive at the logistic interpolation model. For the residual lifetimes of $0$ and $1$, we have that $\hat{W}(0) =0$ and $\hat{W}(1) =\Delta b$. As the horizon becomes infinity, $\hat{W}(.)$ must converge to $\overline{W}$, giving the final constraint $\hat{W}(\infty)= \overline{W}$. Solving this system yields the solution: \[C_1=2\overline{W},\ 
C_2=-\log\left(\left(\frac{\Delta b}{C_1}+ \frac{1}{2}\right)^{-1}-1\right),\  C_3=-\overline{W}.\]

We note that both interpolations start from $\hat{W}=0$ for $h=0$ and saturate to $\hat{W}= \overline{W}$ as $h \rightarrow \infty$.

We compare the index values computed by our interpolation algorithms with the exact solution by \cite{qian2016restless}. Figure \ref{fig:approximation} shows an illustrative example, plotting the index values as a function of the residual lifetime and shows that the interpolated values agree well with the exact values. 

\textbf{Infinite horizon index:} For transition matrices that satisfy the conditions for forward threshold policies to be optimal, \citet{mate2020collapsing} present an algorithm that computes $\overline{W}$ cheaply. The cornerstone of their technique is to leverage forward threshold optimality to map the passive and active actions to two different forward threshold policies, and find the value of subsidy $m$ that makes the expected reward of the policies equal. We extend this reasoning to reverse threshold optimal arms.  


\subsection{Complexity analysis}
\label{sec:complexity}


For the complexity analysis of the algorithms, we denote by $\bar{X}$ the expected number of arms arriving each time step and $\bar{L}$ their average expected lifetimes. The expected number of arms at any point in time is then $\mathcal{O}(\bar{X}\bar{L})$ \citep{little1961proof}. Our algorithms (both versions) require a per-period cost of $\mathcal{O}(\bar{X}*|\mathcal{B}_i|=\bar{X}*2\bar{L})$ for the Threshold Whittle pre-computations, plus $\mathcal{O}(\bar{X})$ computations for the myopic cost, plus $\mathcal{O}(\bar{X}\bar{L}*\bar{L})$ calculations (for $\bar{X}\bar{L}$ arms, each requiring up to $\bar{L}$ additions or multiplications) and $\mathcal{O}(\bar{X}\bar{L})$ for determining the top $k$ indices. The overall per-period complexity of our algorithm is thus $\mathcal{O}(\bar{X}\bar{L}^2)$. 

For comparison, 
\citeauthor{qian2016restless} has a per-period complexity of $\approx\mathcal{O}\big( \bar{X}\bar{L}^{(3 + \frac{1}{18})}\log(\frac{1}{\epsilon})\big)$, where $\log(\frac{1}{\epsilon})$ is due to a bifurcation method for approximating the Whittle index to within error $\epsilon$ on each arm and $\bar{L}^{2 + \frac{1}{18}}$ is due to the best-known complexity of solving a linear program with $\bar{L}$ variables \citep{jiang2020faster}
. 

\subsection{Reverse Threshold Arms} \label{sec:reverse-threshold}

Computing the infinite horizon Whittle index cheaply ($\overline{W}$) is key to the runtime efficiency of our approach. 
Existing methods provide techniques to compute $\overline{W}$ used in the previous subsection, when the transition matrices satisfy the forward threshold optimality conditions. In this subsection, we describe how the technique can be extended to the case when reverse threshold optimality conditions are satisfied.  

All the belief states that an arm can ever visit during its lifetime $L$ can be enumerated and organized into two chains — each chain corresponding to one of the two possible observations ($\omega \in \{0,1\}$) last observed for that arm. These chains are shown in Figure~\ref{fig:belief-chains}. \cite{mate2020collapsing} present an  algorithm to compute the index for forward threshold arms with belief states belonging to the NIB process (i.e. whenever $b>b_{stationary}=\frac{P_{01}^p}{P_{01}^p+P_{10}^p}$). The algorithm relies on mapping the active and passive actions to two different forward threshold policies (with corresponding threshold states on the two chains indexed as $X_0, X_1$) and equating the policies' rewards to solve for the passive subsidy $m$, that makes the two actions equal. 

We extend this reasoning to reverse threshold arms with belief chains belonging to the $\omega=0$ chain of the SB (split-belief) process, as shown in Figure~\ref{fig:belief-chains}. The belief states belonging to the increasing chain ($\omega=0$ chain) satisfy $b<b_{stationary}=\frac{P_{01}^p}{P_{01}^p+P_{10}^p}$. We identify two different reverse threshold policies that correspond to the active and passive actions, which can be used to set up similar indifference equations. For a given belief state on the increasing chain with index in the chain $X$, the corresponding reverse threshold policies can be indexed by $(X_0, X_1)= (1, X)$ and $(X_0, X_1)=(1, X+1)$ and used to solve for the whittle index using the indifference equation outlined in Algorithm 1 of \cite{mate2020collapsing}.  

\begin{figure}[h!]
    \centering
    \includegraphics[width=0.6\linewidth]{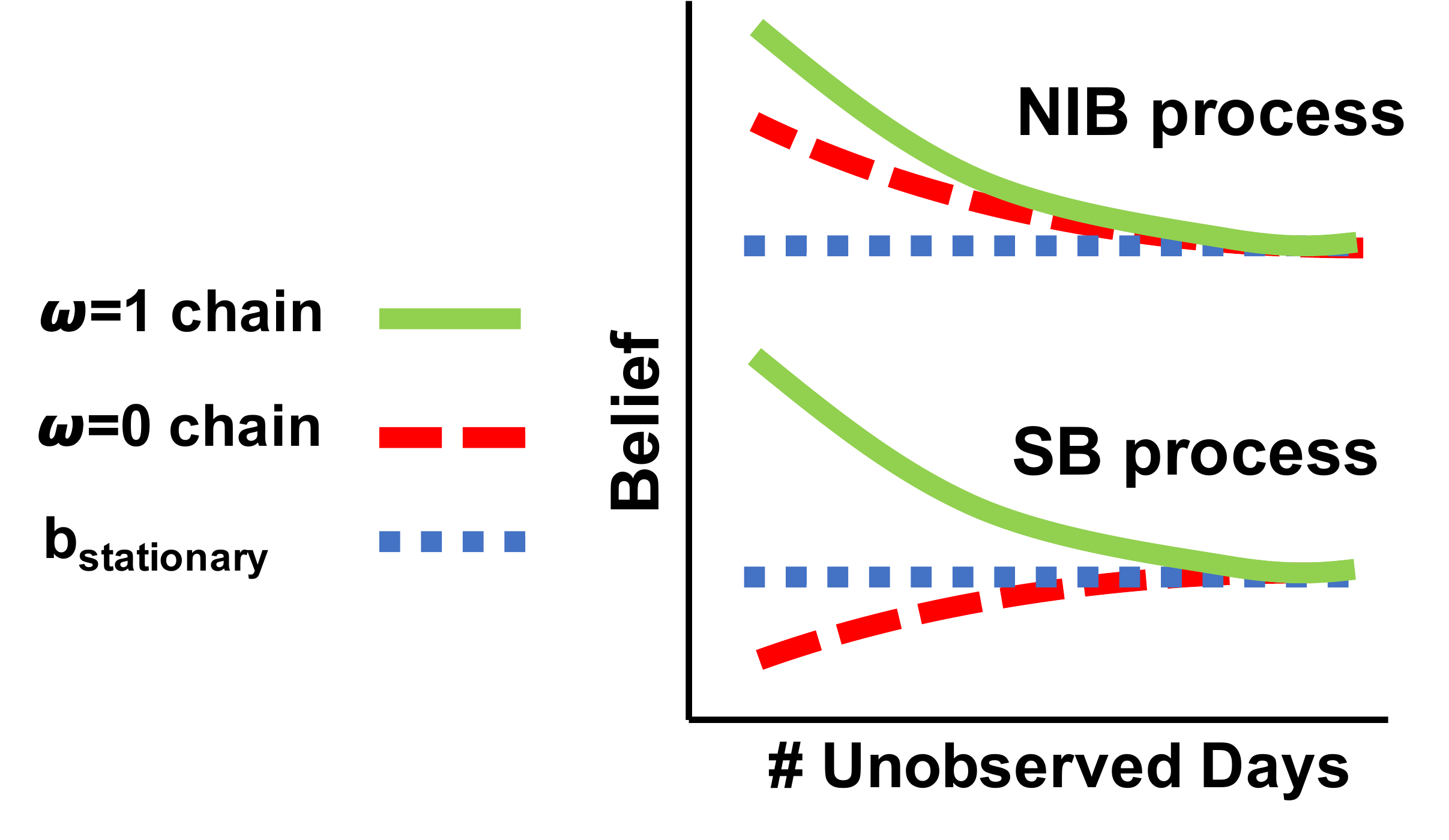}
    \caption{Belief values arranged in chains as presented in \cite{mate2020collapsing}. For every possible last observed state of the arm, $\omega$, there is a corresponding chain of belief states.}
    \label{fig:belief-chains}
\end{figure}

\section{Experimental evaluation}\label{sec:experiments}

We evaluate the performance and runtime of our proposed algorithms against several baselines, using both, real as well as synthetic data distributions. \textsc{Logistic} and \textsc{Linear} are our proposed algorithms. Our main baselines are: (1) a precise, but slow algorithm by \textsc{\citeauthor{qian2016restless}}, which accounts for the residual lifetime by solving the expensive finite-horizon POMDP on each of the $N$ arms and finds the $k$ best arms to pull and (2) Threshold-Whittle \citep{mate2020collapsing} (marked as \textsc{TW}), a much faster algorithm, that is only designed to work for infinitely long residual time horizons. \textsc{Myopic} policy is a popularly used baseline \citep{mate2020collapsing,qian2016restless, zhao_liyu_paper} that plans interventions optimizing for the expected reward of the immediate next time step. 
\textsc{Random} is a naive baseline that pulls $k$ arms at random. 

Performance is measured as the excess average intervention benefit over a `do-nothing' policy, measuring the sum of rewards over all arms and all timesteps minus the reward of a policy that never pulls any arms. Intervention benefit is normalized to set \cite{qian2016restless} equal to $100\%$ and can be obtained for an algorithm $\texttt{ALG}$ as: $\frac{100\times (\overline{R}^{\texttt{ALG}}- \overline{R}^{\text{No\  intervention}})}{\overline{R}^{\text{Qian et al.}}- \overline{R}^{\text{No\  intervention}}}$ where $\overline{R}$ is the average reward.  
All simulation results are measured and averaged over 50 independent trials and error bars denote the standard errors. 
\begin{figure}[h!]
\begin{subfigure}{0.33\textwidth}
    \centering
    \includegraphics[width=0.85\linewidth]{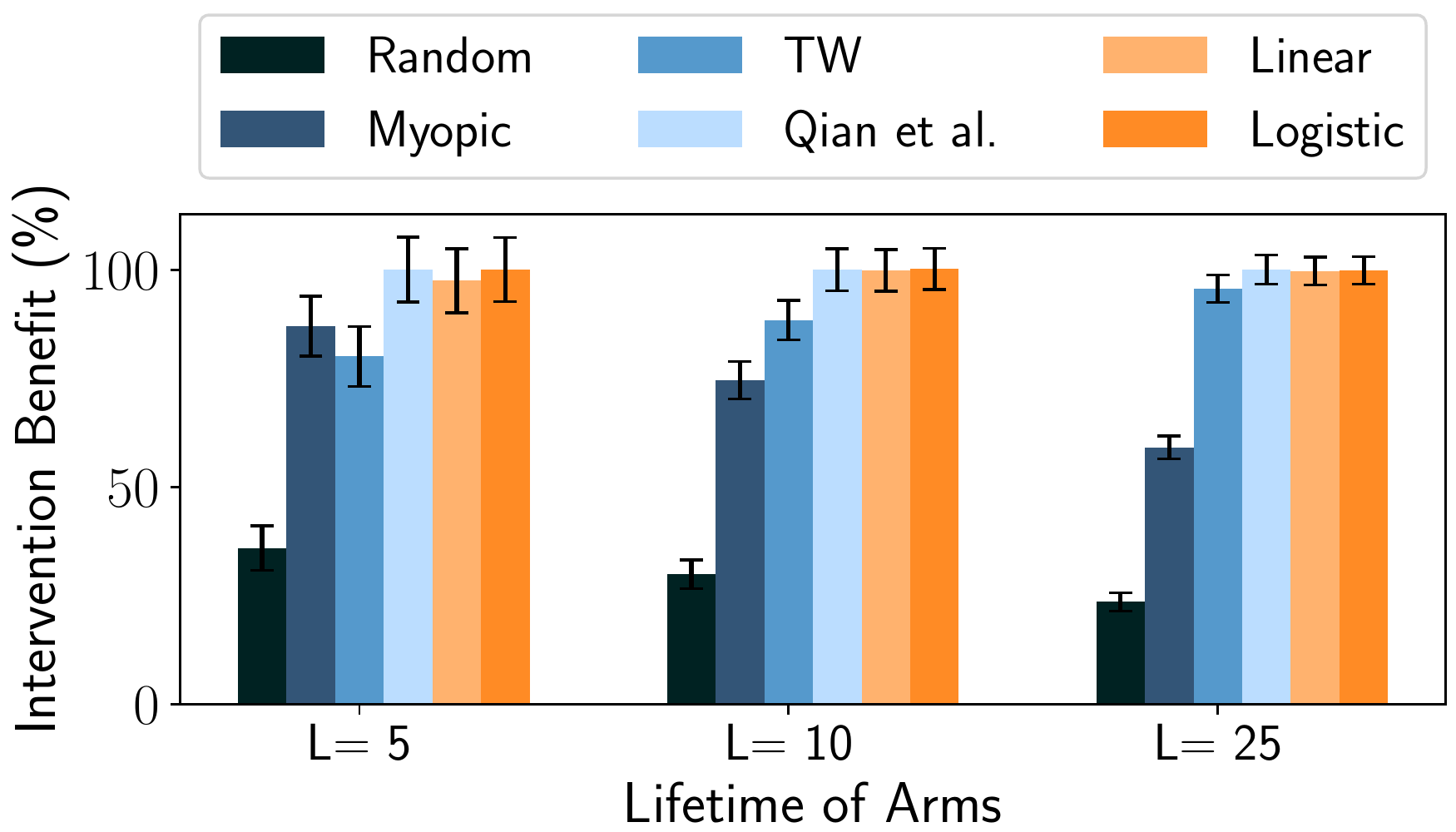}
    \caption{Arms arriving synchronously}
    \label{fig:ib-vs-L-SYNC}
\end{subfigure}
\begin{subfigure}{0.33\textwidth}
    \centering
    \includegraphics[width=0.85\linewidth]{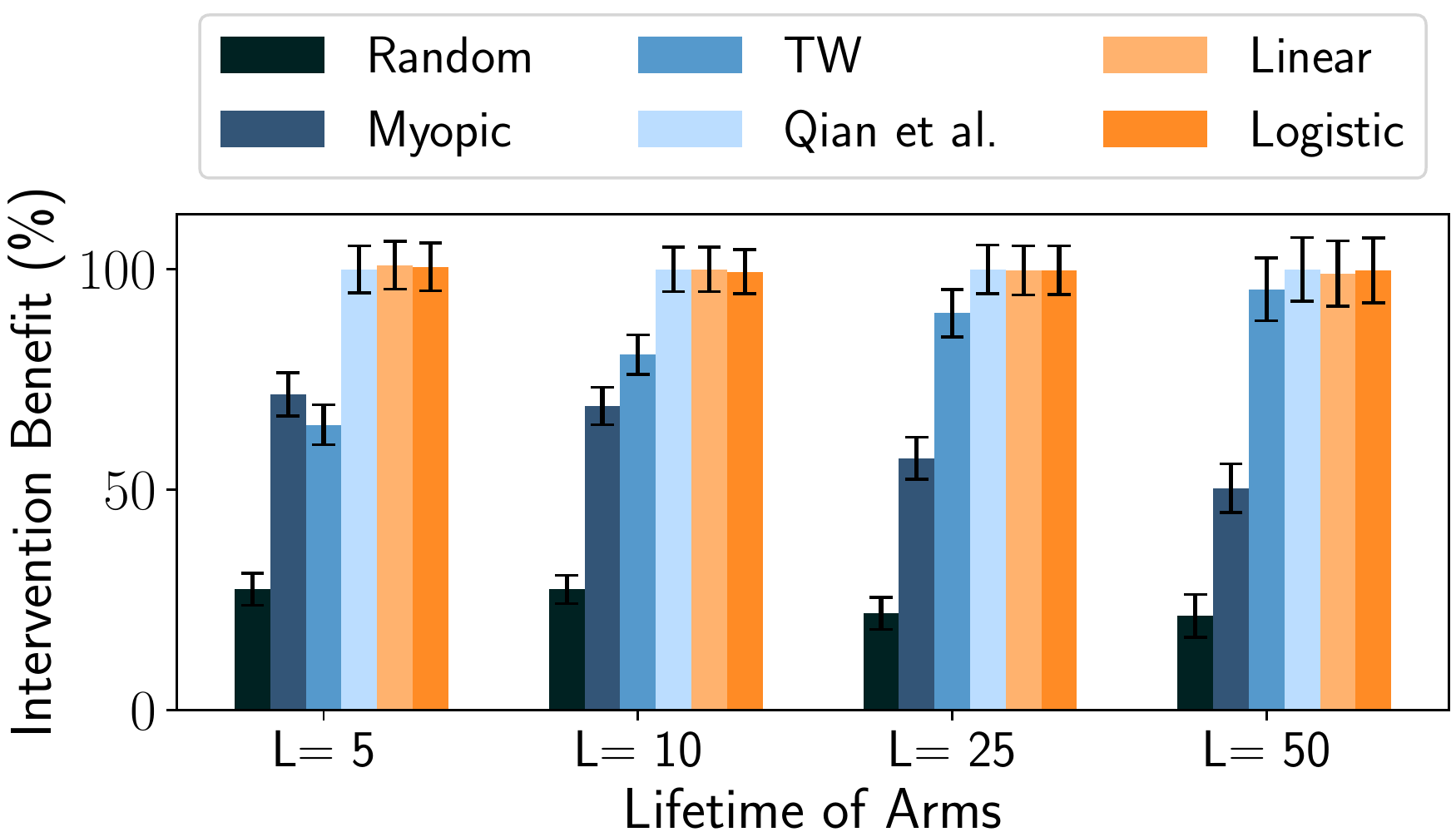}
    \caption{Arms arriving asynchronously as a stochastic process}
    \label{fig:ibPlot-vs-L-poisson}
\end{subfigure}
\caption{(a) Performance of Threshold Whittle algorithm degrades when the lifetime of arms gets shorter, even when all arms start synchronously (b) The performance dwindles further if arms arrive asynchronously.}
\end{figure}

\begin{figure*}[t]
\begin{subfigure}{0.2\textwidth}
    \centering
    \includegraphics[width=0.87\linewidth]{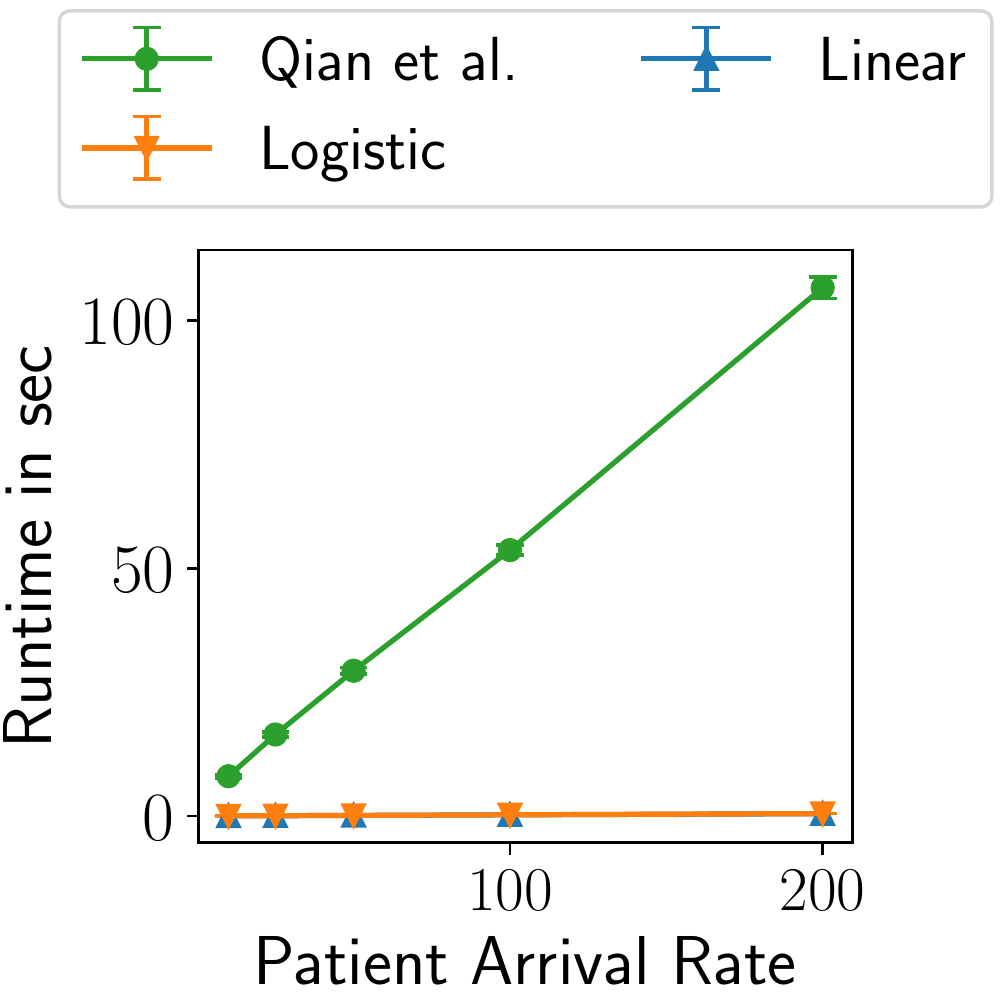}
    \caption{}
    \label{fig:runtime_tb}
\end{subfigure}
\begin{subfigure}{0.34\textwidth}
    \centering
    \includegraphics[width=0.87\linewidth]{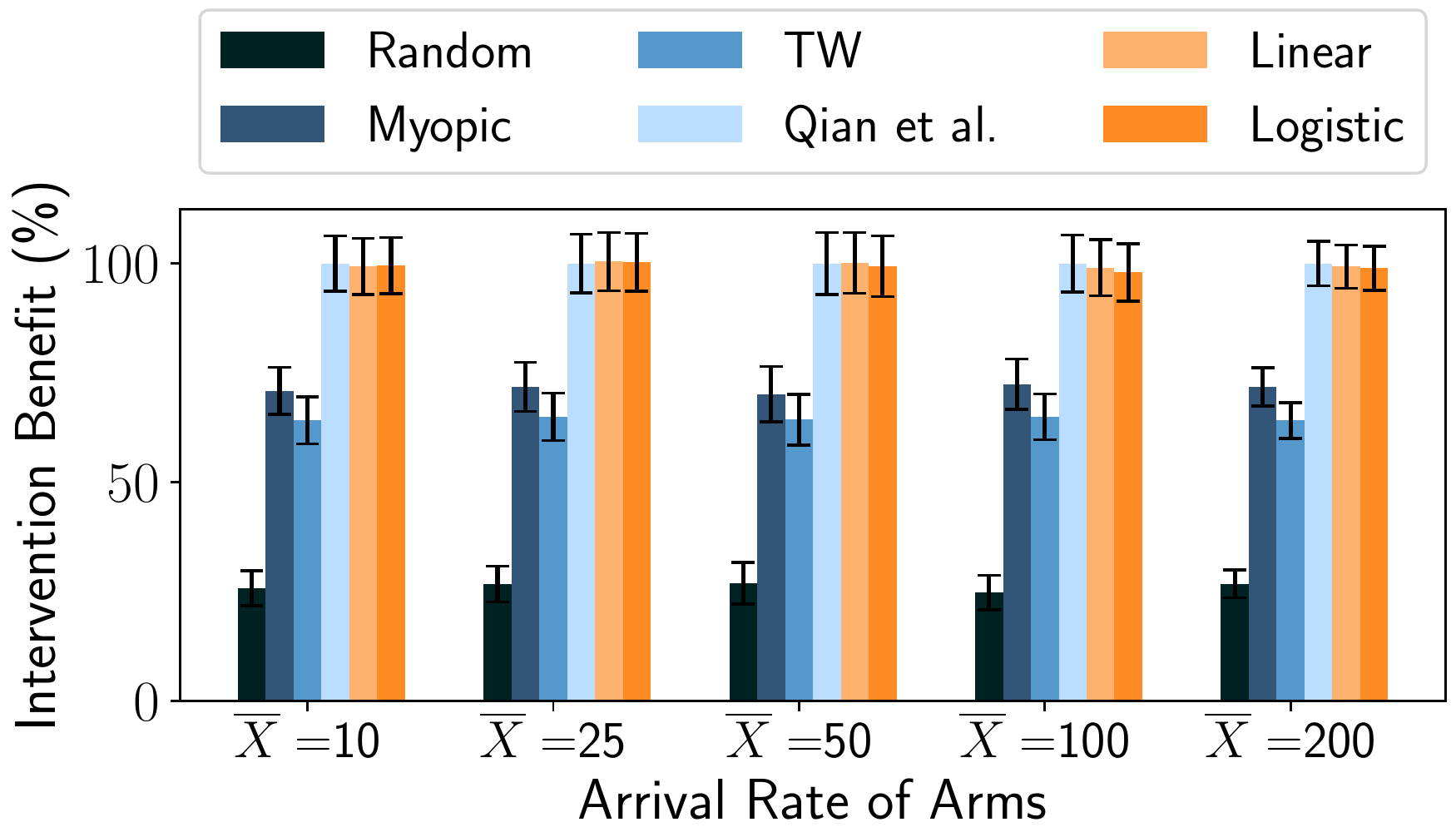}
    \caption{}
    \label{fig:ibPlot-vs-N-short}
\end{subfigure}
\begin{subfigure}{0.34\textwidth}
    \centering
    \includegraphics[width=0.88\linewidth]{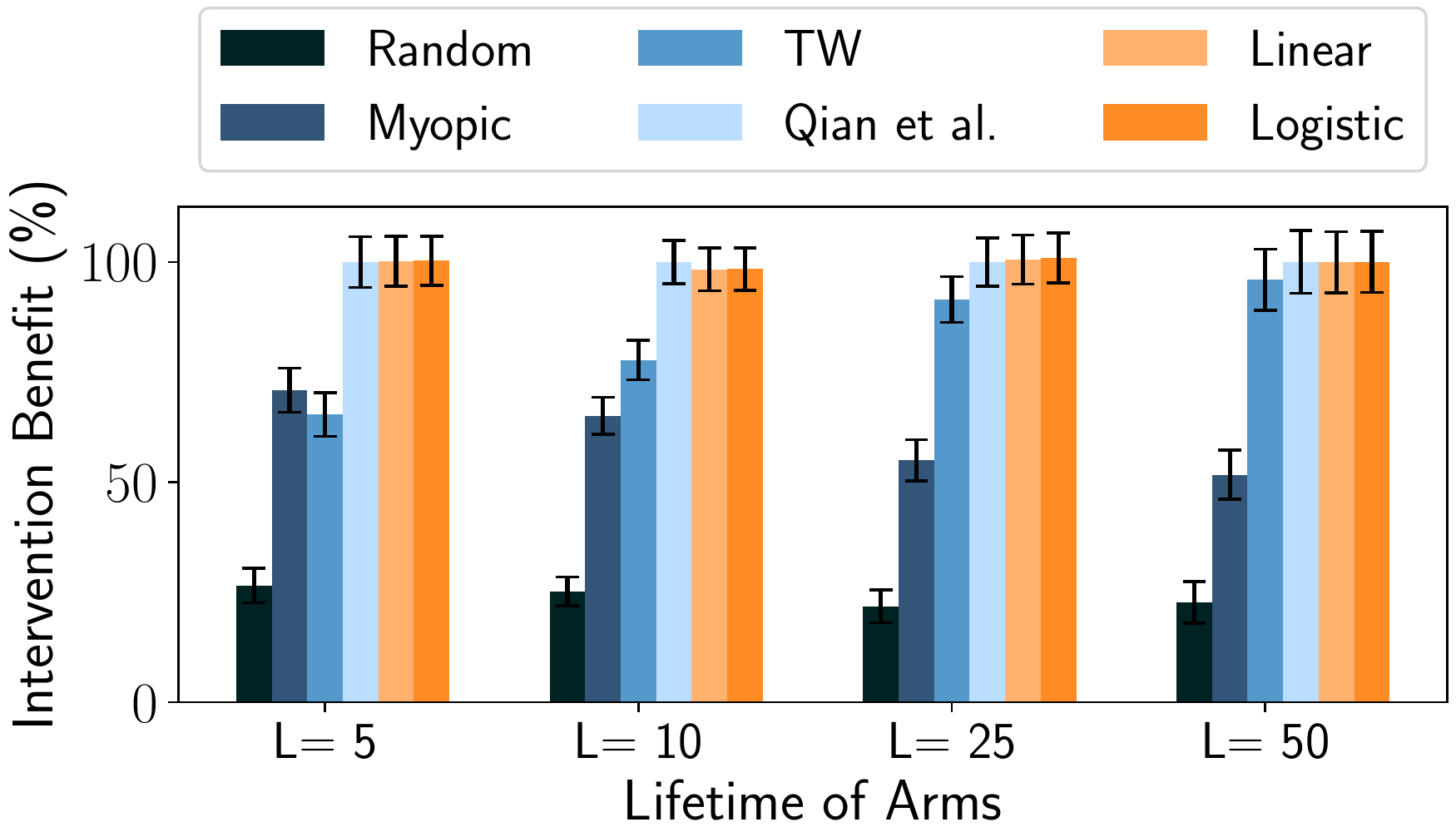}
    \caption{}
    \label{fig:ibPlot-vs-L-deterministic}
\end{subfigure}
\caption{(a) Linear and Logistic interpolation algorithms are nearly $200 \times$ faster than Qian et al. (b) \& (c) The interpolation algorithms achieve the speedup without sacrificing on performance, while other fast algorithms like Threshold Whittle deteriorate significantly for small residual horizons.}
\label{fig:arrival-rate-figure}
\end{figure*}

\subsection{Real domain: Monitoring tuberculosis medication adherence }\label{sec:TB_domain}
We first test on an anonymized real-world data set used by \cite{killian2019learning}, consisting of daily adherence data of tuberculosis patients in Mumbai, India following a prescribed treatment regimen for six months. For our study, we only obtain the summary statistics capturing the transition probabilities of these patients moving between the adherent and non-adherent states as extracted from the dataset. We then follow the same data imputation steps adopted by \cite{mate2020collapsing} for arriving at the transition matrices, $P_{ss'}^a$ and $P_{ss'}^p$ for each patient. We sample transition matrices from this real-world patient distribution and run simulations over a simulation length much longer than the lifetimes of the patients in the simulation.

In Figure~\ref{fig:ib-vs-L-SYNC}, we first demonstrate the impact of a short horizon alone on the performance of various algorithms in a simple, non-streaming setting. In Figure ~\ref{fig:ibPlot-vs-L-poisson}, we contrast this with a similar comparison for the short horizon setting combined with a stochastic incoming stream of patients.

In Figure~\ref{fig:arrival-rate-figure}, we again consider the finite horizon setting with a deterministic incoming stream of patients. In Figure \ref{fig:runtime_tb}, we plot the runtimes of our algorithms and that of \citeauthor{qian2016restless}, as a function of the daily arrival rate, $\bar{X}$ of the incoming stream. Figure \ref{fig:ibPlot-vs-N-short} measures the intervention benefits of these algorithms for these values of $\bar{X}$. The lifetime of each arm, $L$ is fixed to $5$ and the number of resources, $k$ is set to $10\% \times(\bar{X}L)$. Each simulation was run for a total length $T$ such that $\bar{X}T=5000$, which is the total number of arms involved in the simulation. Runtime is measured as the time required to simulate $L$ days. The runtime of \citeauthor{qian2016restless} quickly far exceeds that of our algorithms. For the $\bar{X}=200$ case, a single trial of  \citeauthor{qian2016restless} takes 106.69 seconds to run on an average, while the proposed Linear and Logistic interpolation algorithms take 0.47 and 0.49 seconds respectively, while attaining virtually identical intervention benefit. Other competing fast algorithms like Threshold Whittle, which assume an infinite residual horizon, suffer a severe degradation in performance for such short residual horizons. Our algorithms thus manage to achieve a dramatic speed up over existing algorithms, without sacrificing on performance. 



In Figure~\ref{fig:ibPlot-vs-L-deterministic}, we consider an S-RMAB setting, in which arms continuously arrive according to a deterministic schedule, and leave after staying on for a lifetime of $L$, which we vary on the x-axis. The details about the other parameters are deferred to the appendix. We also study the isolated effects of small lifetimes and asynchronous arrivals separately as well as performance in settings with stochastic arrivals, in the appendix. Across the board, we find that the performance of \textsc{TW} degrades as the lifetime becomes shorter and that this effect only exacerbates with asynchronous arrivals. The performance of our algorithms remains on par with \citeauthor{qian2016restless}, in all of the above.

\begin{figure*}[t]
\begin{subfigure}{0.2\textwidth}
    \centering
    \includegraphics[width=0.89\linewidth]{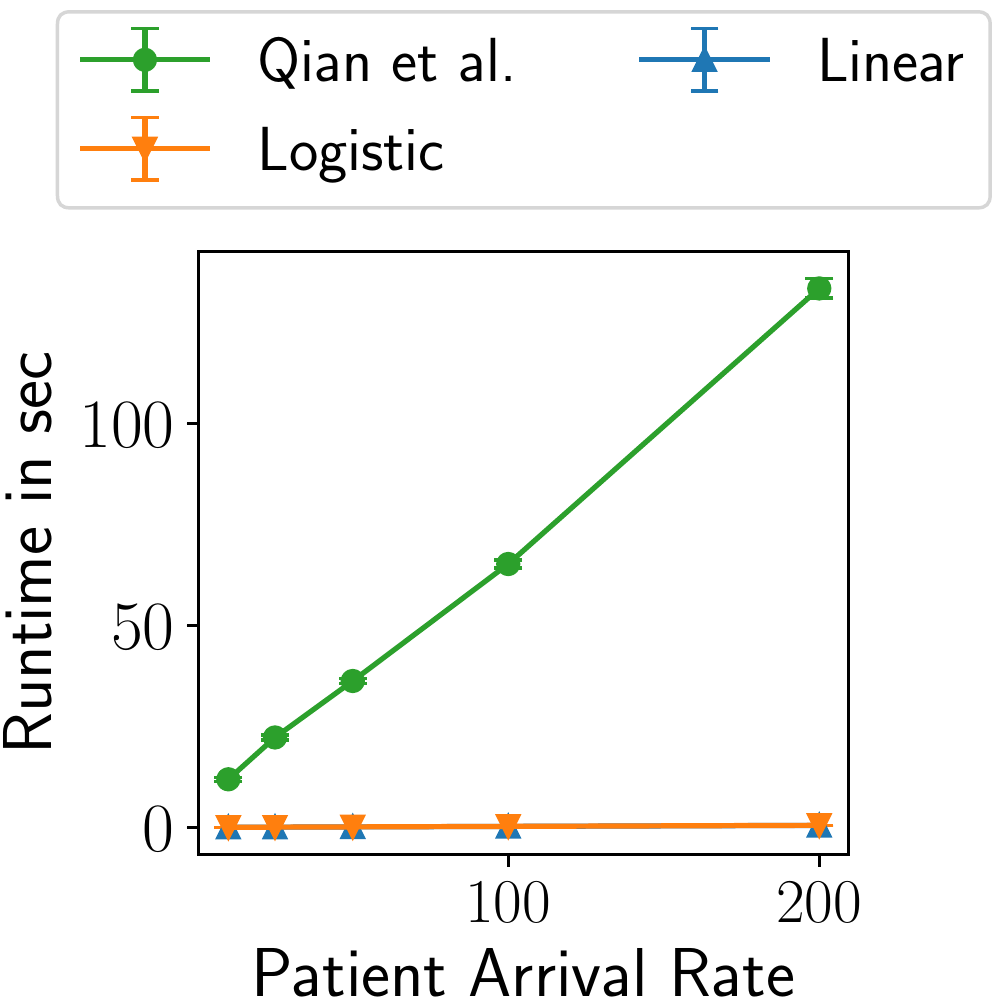}
    \caption{}
    \label{fig:runtime-vs-n-armman}
\end{subfigure}
\begin{subfigure}{0.31\textwidth}
    \centering
    \includegraphics[width=0.87\linewidth]{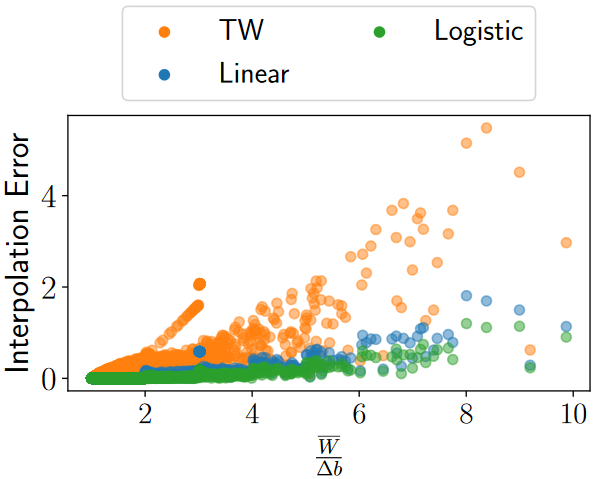}
    \caption{}
    \label{fig:error-scatterplot}
\end{subfigure}
\begin{subfigure}{0.36\textwidth}
    \centering
    \includegraphics[width=0.89\linewidth]{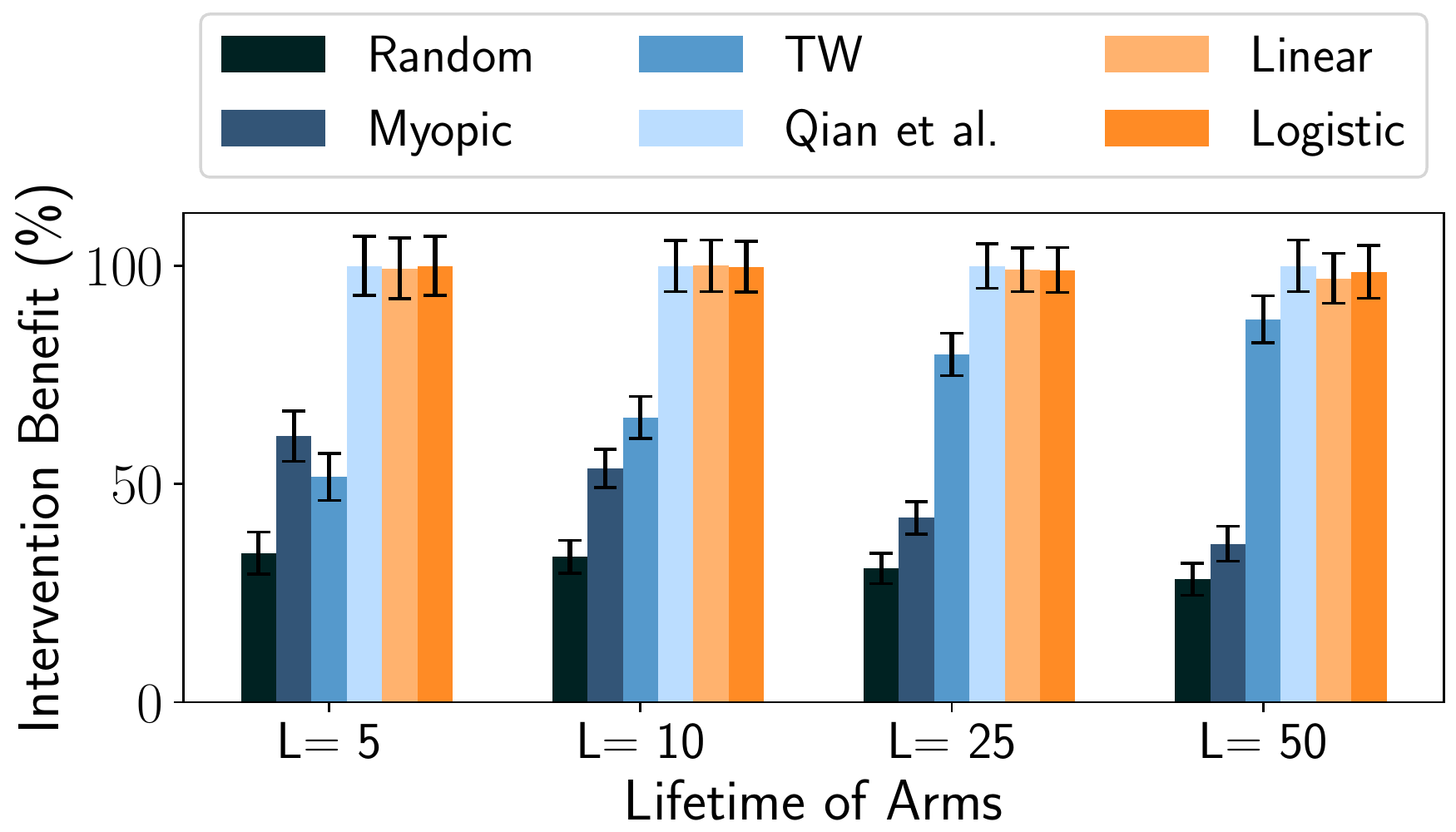}
    \caption{}
    \label{fig:ib-vs-L-reverse-threshold}
\end{subfigure}
\caption{(a) The interpolation algorithms achieve a speedup of about $250 \times$ over baselines.(b) The error between the actual and estimated indices is largest for TW and lower for our interpolation algorithms (c) The good performance is maintained even for reverse threshold optimal arms.}
\end{figure*}

\subsection{Real domain: ARMMAN for improving maternal healthcare}
Considering an alternate real-world domain, we again only use summary statistics (transition probabilities) from an application domain consisting of intervention planning for improving maternal healthcare \cite{biswas2021learn}. 
Individuals (arms) are labeled to be in one of three states at any time step, of which one is the good state. \cite{mate2021field} cast the problem as an RMAB with 2-state MDP on each arm. We also focus on maximizing the number of individuals in the good state, merging the other two states from the data into a single bad state. The data set consists of three types of transition matrices for different groups, only one of which satisfies the constraints mentioned in Section~\ref{sec:formulation} and is used in our subsequent analysis, which is otherwise analogous to Section~\ref{sec:TB_domain}. Figure~\ref{fig:runtime-vs-n-armman} establishes similar large runtime gains achieved by our algorithm as against other baselines, while maintaining similar performance figures in this domain. In the supplementary material we also present more details and analyses of the performance of our algorithms and baselines for this domain.



\subsection{Synthetic domains}

Finally, in this section, we test our algorithms on synthetic domains. We identify corner cases where our solutions do poorly and construct adversarial domains based on those. The ratio between the infinite horizon Whittle Index $\bar{W}$ and the myopic index $\Delta b$ is an important driver of the approximation quality of our algorithms. The linear interpolation takes $\frac{\bar{W}}{\Delta b}$ steps to reach the finite horizon value, hence the higher this ratio is, higher the potential for approximation errors. In figure ~\ref{fig:error-scatterplot} we sum the approximation error over this interval $\epsilon \coloneqq \sum_{h=1}^{h=\frac{\overline{W}}{\Delta b}}(\|\hat{W}(h)- W_{Qian}(h)\|)$ and plot it for different ratios $\frac{\bar{W}}{\Delta b}$. As expected, the approximation error increases with $\frac{\bar{W}}{\Delta b}$. We construct an adversarial domain by simulating cohorts with varying proportions of such patients. The results in the supplementary material show the intervention benefit of our algorithms decreases but remains within one standard error of~\citeauthor{qian2016restless}

In Figure~\ref{fig:ib-vs-L-reverse-threshold}, we simulate a population consisting of reverse threshold optimal patients exclusively and show similar good performance even though the previous theoretical guarantees of Threshold Whittle apply to forward threshold optimal patients only. In the supplementary material, we test multiple synthetic domains by varying the proportion of forward threshold optimal patients. In addition, we perform several other robustness checks varying important problem parameters and find that the run time and strong performance of our algorithms remains consistent across the board.

\section{Conclusion}

We study \emph{streaming bandits}, or S-RMAB, a class of bandits where heterogeneous arms arrive and leave asynchronously under possibly random streams. While efficient RMAB algorithms for computing Whittle Indices for infinite horizon settings exist, for the finite horizon settings however, these algorithms 
are either comparatively costly or not suitable for estimating the Whittle Indices accurately. To tackle this, we provide a new scalable approach that allows for efficient computation of the Whittle Index values for finite horizon restless bandits while also adapting to more general S-RMAB settings. Our approach leverages a phenomenon called \textit{index decay} to compute the indices for each arm. Through an extensive set of experiments on real-world and synthetic data, we demonstrate that our approach provides good estimates of Whittle Indices, and yield over $200\times$ runtime improvements without loss in performance.

\begin{acks}
This work was supported in part by the Army Research Office by MURI grant number W911NF1810208. A.B. and C.S. were supported by the Harvard Center for Research on Computation and Society.
\end{acks}



\balance
\bibliographystyle{ACM-Reference-Format} 
\bibliography{ref}


\end{document}